%% file: main.tex
\documentclass[10pt]{article}

\usepackage[a4paper,left=1.9cm,right=1.9cm,top=2.0cm,bottom=2.2cm]{geometry}

\usepackage{mathtools}          
\usepackage{amsthm}

\usepackage{iftex}
\ifPDFTeX
  \usepackage[T1]{fontenc}
  \usepackage[utf8]{inputenc}
  \usepackage{newtxtext,newtxmath}
  \usepackage[scaled=0.92]{helvet}   
\else
  \usepackage{fontspec}
  \usepackage{newtxtext,newtxmath}
\fi
\newcommand{\sans}{\sffamily}
\usepackage{bm}

\usepackage{graphicx}
\usepackage{xcolor}
\usepackage[most]{tcolorbox}
\usepackage{tikz}
\usetikzlibrary{positioning,arrows.meta,calc,fit,backgrounds}
\usepackage{pgfplots}
\pgfplotsset{compat=1.17}
\usepackage{booktabs}
\usepackage{array}
\usepackage{ragged2e}
\usepackage{enumitem}
\usepackage{microtype}
\usepackage{colortbl}
\usepackage[font=small,labelfont=bf]{caption}

\definecolor{scnavy}{HTML}{155050}
\definecolor{scblue}{HTML}{004747}
\definecolor{sclink}{HTML}{00A3A3}
\definecolor{scbox}{HTML}{EEF1F5}     
\definecolor{scrule}{HTML}{CDD9E5}
\definecolor{scgray}{HTML}{8A97A6}
\definecolor{scrowalt}{HTML}{F4F7FA}

\usepackage[colorlinks=true,linkcolor=scnavy,urlcolor=sclink,citecolor=sclink]{hyperref}

\usepackage{algorithm}
\usepackage{algpseudocode}

\usepackage{titlesec}
\titleformat{\section}{\sans\large\bfseries\color{scnavy}}{\thesection}{0.6em}{}
\titleformat{\subsection}{\sans\normalsize\bfseries\color{scblue}}{\thesubsection}{0.6em}{}
\titleformat{\subsubsection}{\sans\small\bfseries\color{scblue}}{\thesubsubsection}{0.6em}{}
\titlespacing*{\section}{0pt}{14pt}{5pt}
\titlespacing*{\subsection}{0pt}{9pt}{3pt}

\theoremstyle{plain}
\newtheorem{proposition}{Proposition}[section]

\theoremstyle{definition}

\newtheorem{assumption}[proposition]{Assumption}
\theoremstyle{remark}
\newtheorem{remark}[proposition]{Remark}

\newcolumntype{L}[1]{>{\raggedright\arraybackslash}p{#1}}
\newcolumntype{R}[1]{>{\raggedleft\arraybackslash}p{#1}}

\newcommand{\thd}[1]{\textbf{\textcolor{white}{#1}}}      
\newcommand{\hdrow}{\rowcolor{scnavy}}

\newcommand{\E}{\ensuremath{\mathbb{E}}}
\newcommand{\Var}{\ensuremath{\operatorname{Var}}}
\newcommand{\Cs}{\ensuremath{C^{\star}}}                   
\newcommand{\Chat}{\ensuremath{\widehat{C}_k}}             
\newcommand{\mk}{\ensuremath{m_k}}                         
\newcommand{\Uk}{\ensuremath{U_k}}                         
\newcommand{\kstar}{\ensuremath{k^{\star}}}                
\newcommand{\eps}{\ensuremath{\varepsilon}}                
\newcommand{\Nhat}{\ensuremath{\widehat{N}}}               

\newtcolorbox{authornote}{colback=scrowalt,colframe=scrule,boxrule=0.6pt,arc=3pt,
  left=8pt,right=8pt,top=6pt,bottom=6pt,fontupper=\small}

\usepackage{fancyhdr}
\begin{document}

\begin{tcolorbox}[
    enhanced, colback=scbox, colframe=scbox, boxrule=0pt,
    arc=7pt, left=16pt, right=16pt, top=15pt, bottom=13pt,
    drop shadow={black!12}]

{\sans\bfseries\fontsize{20}{24}\selectfont \color{scnavy}
The German Tank Problem \& Navigating by Scent\par}
\vspace{5pt}
{\sans\large\color{scblue}How an old defense strategy informed a critical problem in robotic navigation\par}
\vspace{6pt}

Kordel K.\ France\\
\textit{Scentience, Inc.}
\vspace{8pt}

\hrule height 0.4pt \vspace{8pt}
\input{sections/main/00_abstract}

\vspace{9pt}

\hrule height 0.4pt \vspace{7pt}
{\setlength{\parskip}{2pt}
\textbf{\sans Date:}\; 29 October 2026 \\
\textbf{\sans Version:}\; v2.1 \\
\textbf{\sans Companion paper:}\; \href{https://www.mdpi.com/1424-8220/26/9/2849}{Chasing Ghosts} \\
\textbf{\sans Correspondence:}\; \href{mailto:kordel@scentience.ai}{kordel@scentience.ai} \\
\textbf{\sans Access:}\; \href{https://scentience.ai/api-docs}{Scentience API} \\
\par}

\IfFileExists{figures/scentience_logo_white_transparent_padding.png}{%
\vspace{-50pt}
\hfill\raisebox{0pt}{\includegraphics[height=40pt]{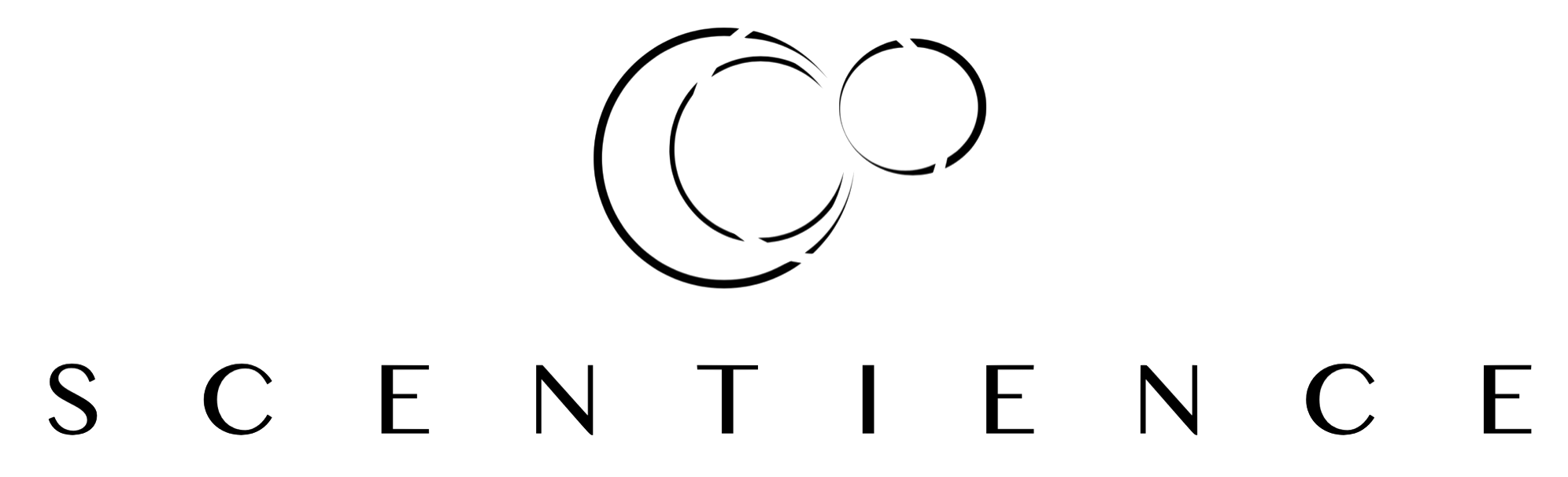}}
\vspace{5pt}}{}

\end{tcolorbox}

\input{sections/main/01_intro}
\input{sections/main/02_background}
\input{sections/main/03_method}
\input{sections/main/04_decision}
\input{sections/main/06_limitations}

\input{sections/main/07_conclusion}
\input{sections/main/08_llm_declaration}

\label{main-text-end}
\vspace{-4mm}
{\small
\bibliographystyle{plain}
\bibliography{main}
}

\clearpage
\appendix
\label{supp-start}
\setcounter{figure}{0}\setcounter{table}{0}\setcounter{equation}{0}\setcounter{algorithm}{0}
\renewcommand{\thefigure}{S\arabic{figure}}\renewcommand{\thetable}{S\arabic{table}}
\renewcommand{\theequation}{S\arabic{equation}}\renewcommand{\thealgorithm}{S\arabic{algorithm}}
\renewcommand{\theHfigure}{S\arabic{figure}}\renewcommand{\theHtable}{S\arabic{table}}
\renewcommand{\theHequation}{S\arabic{equation}}\renewcommand{\theHalgorithm}{S\arabic{algorithm}}

\begin{tcolorbox}[
    enhanced, colback=scbox, colframe=scbox, boxrule=0pt,
    arc=7pt, left=16pt, right=16pt, top=15pt, bottom=13pt,
    drop shadow={black!12}]
{\sans\bfseries\fontsize{17}{21}\selectfont \color{scnavy} Supplementary Material \par}
\vspace{4pt}
\vspace{7pt}
\hrule height 0.4pt \vspace{7pt}
\small
The artifacts of the supplementary material are organized  in the order by which they are referred within the main text.
\end{tcolorbox}
\suppressfloats[t]

\vspace{6pt}

\input{sections/supplementary/supplementary}

\end{document}

%% file: sections/main/00_abstract.tex

\textbf{Abstract.}
A robot searching for the source of an odor must eventually answer a notably hard question: \textit{have I arrived?}
Near the source, concentration gradients homogenize, turbulence corrupts the signal, and a small robot often lacks the positional accuracy to map the plume.
Here, we describe a lightweight remedy inspired by a World War II statistics conundrum: the German tank problem.
To approximate German armament production, Allied analysts developed an estimator based on serial numbers from captured tanks.
Analogously, we use the same procedure over sampled chemical sensor readings from a robot to create estimates for the peak concentration of an odor plume, and therefore the odor's source.
Treating each sensor reading as an observation, our unbiased estimator needs only a running maximum and a sample counter.
A confidence bound turns both numbers into a principled rule that the robot or UAV can use to determine whether it should keep searching for the odor source.
We derive the estimator and its bound, demonstrate its sample efficiency, and examine its robustness even when its assumptions fail.
Finally, we summarize the estimator's role as the termination criterion for olfactory navigation.


%% file: sections/main/01_intro.tex
\section{Introduction}\label{sec:intro}

\begin{figure}[b]
\centering
\includegraphics[width=1.0\linewidth]{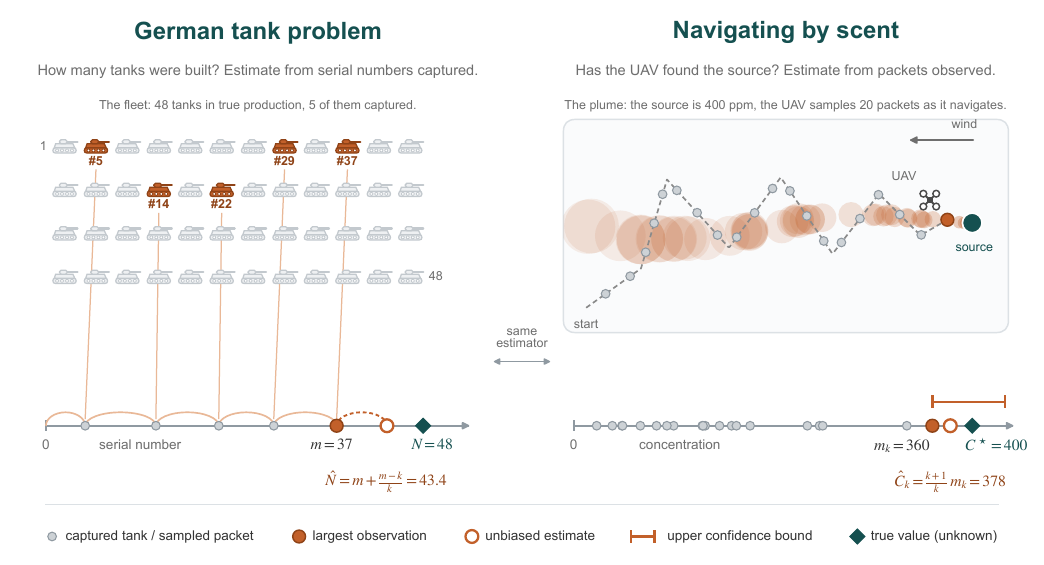}
\caption{\small The German tank problem uses the same estimator as a robot determining when it has reached the source of an odor plume.}
\label{fig:gtp_concept}
\end{figure}

In any navigation task, a robot must have termination criteria for knowing it has arrived at its destination.
One of the most difficult problems for a robot navigating by scent is having it estimate when it is ``done''.
In other words, how does the robot know when it has reached the global source of the odor, and not just a local maximum concentration of the plume?
In the most simplistic perspective, this is analogous to one trying to climb a mountain in the fog without the ability to ever see the top - how would one know if they have reached it?
If a robot is indeed near the source but does not yet know, it may circle around trying to maximize the gradient, in turn distorting the plume and creating more turbulence in the process.

Declaring ``done'' is hard for compounding reasons.
A typical undisturbed plume becomes simultaneously more concentrated and more homogeneous toward its source.
Before we formalized the algorithm in this paper, our olfaction-only UAV would spend several seconds circling the source, chasing ever-smaller gradients~\cite{france2026chasing}.
Many times, this distorted the plume so much that it created currents that pulled the UAV away from the source it was already near, and drained more of the battery.
Coupled with the known drift of chemical sensors ~\cite{dennler2022drift}, this mandates a sense of urgency for the robot to find the odor source as fast and with as little movement as possible.

Turbulent plumes arrive as whiffs separated by blanks~\cite{celani2014odor,crimaldi2022active, schmuker2016exploiting}, so no single reading can be decisive by itself.
In addition, small platforms often cannot precisely indicate \emph{where} a sample was taken: on our UAV, payload-induced bias in the inertial measurement unit made long-horizon position estimates unusable~\cite{france2026chasing,france2025oiocalib}, ruling out methods that fuse concentrations with sample locations.
Concentration-based maps hold little utility in environments with wind or HVAC systems due to the constant change in chemistry, and a robot may also not know what concentration to expect at the source.
Termination criteria are therefore warranted to help the robot know whether it has arrived at the plume source or should keep searching.

There is a classic estimation procedure developed in World War II that models this well: the German tank problem.


%% file: sections/main/02_background.tex
\section{The German Tank Problem}\label{sec:tanks}

\begin{table}[b]
\centering\small
\caption{The German tank problem and olfactory navigation share a common thesis: estimate the upper end of a range from the largest value sampled so far.}
\label{tab:analogy}
\begin{tabular}{L{3.2cm}L{5.4cm}L{6.6cm}}
\hdrow \thd{Variable} & \thd{German tank problem} & \thd{Odor source declaration}\\
\textbf{Population} & tanks produced & chemical sensor readings\\
\rowcolor{scrowalt} \textbf{Label on each unit} & serial number & chemical concentration\\
\textbf{Unknown upper end} & total number of produced tanks, $N$ & max concentration at the plume source, $\Cs$\\
\rowcolor{scrowalt} \textbf{Observations} & $k$ captured tanks & $k$ sensor readings\\
\textbf{Sampling} & without replacement & with replacement (readings of a given strength recur)\\
\rowcolor{scrowalt} \textbf{Sufficient summary} & $k$ and the largest serial number, $m$ & $k$ and the running max concentration, $\mk$\\
\end{tabular}
\end{table}

Beginning in 1943, the Economic Warfare Division of the U.S.\ Embassy in London studied the serial numbers on captured German equipment, from tires to tanks, trucks, guns, and rockets, to estimate total enemy production~\cite{ruggles1947empirical}.
For tanks, the most useful numbers were on gearboxes, which typically ran in continuous sequences.
Conventional intelligence consistently produced inaccurate predictions.
Statisticians at the embassy developed an estimation procedure to help produce more reliable metrics as a result.
For June 1940, this estimator put monthly tank production at 169, against a conventional intelligence estimate of 1{,}000; German records examined after the war showed there were truly 122 tanks.
For June 1941, these respective figures were 244, 1{,}550, and 271, and for August 1942, there were 327, 1{,}550, and 342~\cite[p.~89]{ruggles1947empirical}.
The estimator proved so successful that it was later carried forward by other intelligence groups among more scenarios.
Leo A. Goodman formalized the estimator for the public within his 1952 paper ``Serial Number Analysis'' ~\cite{goodman1952serial}.\footnote{Modern frequentist and Bayesian translations can be found  in~\cite{clark2021lessons,simon2024bayesian}.}

The estimator follows from a simple argument.
Suppose tanks are numbered $1,\dots,N$ and $k$ distinct tanks are captured at random, with $m$ representing the largest serial number among them.
We can divide the serial numbers $[1,m]$ by $k$ to create $k$ gaps (counting the stretch below the smallest), which together hold the $m-k$ unseen serial numbers, or $(m-k)/k$ per gap on average.
Nothing distinguishes the unseen stretch above $m$ from the others, so we add one additional gap for the average:
\begin{equation}
  \Nhat \;=\; m + \frac{m-k}{k} \;=\; m\Bigl(1+\frac{1}{k}\Bigr) - 1 .
  \label{eq:tank}
\end{equation}
Formally, the probability that the observed serial number $m$ is indeed the largest serial number $M$ can be calculated by $\Pr(M=m)=\binom{m-1}{k-1}\big/\binom{N}{k}$ for $k\le m\le N$.
We can then calculate the expected value, $\E[M]=k(N+1)/(k+1)$, where $\Nhat$ is unbiased.
$M$ is a complete and sufficient statistic, which effectively means it is the best estimator that captures all knowledge needed to estimate the desired value.
Due to this, $\Nhat$ can be considered as the minimum-variance unbiased estimator~\cite{goodman1952serial,lehmann1998theory}, with $\Var(\Nhat)=(N-k)(N+1)/\bigl(k(k+2)\bigr)\approx N^2/k^2$.

In consequence, there is a characteristic of Equation ~\eqref{eq:tank} that really matters for robots navigating by scent.
Everything the sample says about the estimated number of tanks $N$ can be characterized by two numbers: the maximum observed serial number $m$ and the number of captured tanks $k$.
As we will show in the next section, the requirement to maintain only two numbers to estimate ``done'' creates a very computationally efficient algorithm that can run on virtually any platform - a highly desirable feature for robotics.

%% file: sections/main/03_method.tex

\section{From Tanks to Odor Packets}\label{sec:packets}

Odor source localization is commonly split into three sub-problems: detecting a plume, staying in contact with it while moving upwind, and \textit{declaring} that the source has been reached~\cite{hayes2002distributed,kowadlo2008robot} (we call this, the ``termination criteria'').
The first two have drawn most of the attention within literature, from bio-inspired surge-and-cast controllers and bout detection~\cite{schmuker2016exploiting,shigaki2018design} to information-seeking search~\cite{vergassola2007infotaxis}, probabilistic source-term estimation~\cite{hutchinson2017review}, and learned policies~\cite{singh2023emergent}, with aerial implementations in~\cite{burgues2019smelling,duisterhof2021sniffy} (see~\cite{francis2022gas} for a recent thorough review).
The termination criteria receive far less research attention, yet it decides whether a search ends in success or in a robot spending additional time and energy circling a few meters from its goal.
Surprisingly, this outcome has a large reciprocal influence on the complexity of the algorithms required to perceive and navigate leading up to the termination criteria.
This can result in either success or failure for the robot, which makes the problem worth solving.

An odor plume is a stream of filaments, or packets, carrying a compound (or series of compounds) downwind from its emitter.
At each sampling instant, a chemical sensor reports the concentration of the air packet currently passing over it.
We treat each reading as one observed packet, and its concentration as that packet's ``serial number'' analog (see Table~\ref{tab:analogy}).
This idea is an extension of the paper from \cite{france2026chasing} and we reference their demonstration of the success of this estimator throughout.
Our analogy rests on a few assumptions, the first being about the physical environment (see section \ref{sec:limits} for limitations on these assumptions).

\begin{assumption}\label{ass:physical}
   A passive, non-reacting compound obeys an advection-diffusion equation, $\partial_t c + \mathbf{u}\cdot\nabla c = D\,\nabla^2 c$, with concentration $c$ and a molecular diffusivity $D$, assuming no source terminal apart from the emitter; under such an equation, the largest value of $c$ will be observed at the emitter, or in minimally concentrated air at the initial condition of navigation~\cite{evans2010pde}. 
\end{assumption}

Every packet the robot samples therefore carries at most $\Cs$, the concentration at the source, while packets sampled near the source come arbitrarily close to it, just as no serial number exceeds $N$ while late-production tanks come close.
Because a chemical sensor responds monotonically to concentration over its working range, the same holds in sensor units.

\begin{assumption}\label{ass:model}
After subtracting the clean-air baseline, the readings $c_1,\dots,c_k$ are
(A1)~bounded above by the unknown source value $\Cs$;
(A2)~independent and identically distributed, so a packet of a given strength may be seen again, as in sampling with replacement~\cite{france2026chasing}; and
(A3)~uniformly distributed on $[0,\Cs]$.
\end{assumption}

A1 denotes how the readings are bounded given the environment defined by A0.
A2 reflects that, unlike a captured tank, an odor packet of a given strength can recur.
A3 establishes that no concentration level lies below that at the plume source.
Section~\ref{sec:limits} shows what happens when Assumptions \ref{ass:physical} and \ref{ass:model} fail.
The model is built to be scale- and unit-free such that a gain on every reading scales the estimate and its bounds identically, enabling the method to work among any chemical sensing modality at any measurement frequency without the need for calibration as shown in ~\cite{france2026chasing}.

\section{The Estimator}\label{sec:estimator}

See the appendix for formal proofs of key equations.

\subsection{Two numbers are enough}
Under Assumption~\ref{ass:model}, the likelihood of the maximum concentration $\Cs$ given the observed sensor readings is
\begin{equation}
  L(\Cs) \;=\; \prod_{i=1}^{k} \frac{1}{\Cs}\,\mathbf{1}\{0\le c_i\le \Cs\}
  \;=\; (\Cs)^{-k}\,\mathbf{1}\{\mk \le \Cs\},
  \qquad \mk = \max_{i\le k} c_i .
  \label{eq:lik}
\end{equation}
The sensor readings enter only through $\mk$ and $k$, so by the factorization theorem $\mk$ is sufficient for $\Cs$~\cite{lehmann1998theory}.
Effectively, a robot that keeps only a counter and a running maximum has discarded nothing the readings could tell it about the source.
The likelihood peaks at $\Cs=\mk$, but this maximum-likelihood estimate is biased low, since the strongest packet seen can never exceed the source and is almost never equivalent to it.

\subsection{Sampling distribution and the unbiased estimator}
Because the sensor readings are independent, we can compute their probability such that
\begin{equation}
  \Pr(\mk \le x) \;=\; \prod_{i=1}^{k}\Pr(c_i\le x) \;=\; \Bigl(\frac{x}{\Cs}\Bigr)^{k},
  \qquad 0\le x\le \Cs,
  \label{eq:cdf}
\end{equation}
so $\mk/\Cs$ has density $k\,u^{k-1}$ on the range $[0,1]$ and $\E[\mk]=k\Cs/(k+1)$.
On average, the running maximum falls short of the source by $\Cs/(k+1)$, which is the continuous counterpart of the missing gap $\Nhat$ from Equation \ref{eq:tank} in Section~\ref{sec:tanks}.

\begin{proposition}\label{prop:umvue}
Under Assumption~\ref{ass:model},
\begin{equation}
  \Chat \;=\; \frac{k+1}{k}\,\mk
  \label{eq:umvue}
\end{equation}
is the unique minimum-variance unbiased estimator of $\Cs$, and $\Var(\Chat)=(\Cs)^2/\bigl(k(k+2)\bigr)$.
\end{proposition}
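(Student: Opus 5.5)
The plan is the standard Lehmann--Scheff\'e route: show $\Chat$ is unbiased, show $\mk$ is complete and sufficient, then compute the variance directly from the density of $\mk/\Cs$.

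\emph{Unbiasedness.} This is immediate from what precedes the statement. Equation~\eqref{eq:cdf} gives $\mk/\Cs$ density $k u^{k-1}$ on $[0,1]$, so $\E[\mk]=k\Cs/(k+1)$. Multiplying by $(k+1)/k$ yields $\E[\Chat]=\Cs$ for every $\Cs>0$.

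\emph{Sufficiency and completeness.} Sufficiency of $\mk$ was already obtained from the factorization of the likelihood~\eqref{eq:lik}, so only completeness is new. I would take a measurable $g$ with $\E_{\Cs}[g(\mk)]=0$ for all $\Cs>0$, which reads
\[
  \int_0^{\Cs} g(x)\,k x^{k-1}\,dx \cdot (\Cs)^{-k}=0 ,
\]
and hence $\int_0^{\theta} g(x)x^{k-1}\,dx=0$ for all $\theta>0$.

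Differentiating in $\theta$ (Lebesgue differentiation theorem) gives $g(\theta)\theta^{k-1}=0$ for almost every $\theta$. So $g=0$ a.e. on $(0,\infty)$, and therefore $g(\mk)=0$ almost surely under every $\Cs$.

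This is the one step needing care, and I expect it to be the main obstacle. Rather than differentiating outright, I would argue cleanly that the signed measure $g(x)x^{k-1}\,dx$ vanishes on every interval $[0,\theta]$. It therefore vanishes on all Borel sets by a $\pi$--$\lambda$ argument. Splitting $g=g^+-g^-$, with integrability following from $\E|g(\mk)|<\infty$, keeps this rigorous.

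\emph{Uniqueness.} With $\mk$ complete and sufficient and $\Chat$ an unbiased function of it, the Lehmann--Scheff\'e theorem~\cite{lehmann1998theory} makes $\Chat$ the unique (a.s.) minimum-variance unbiased estimator.

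Uniqueness also follows directly from completeness. If $h(\mk)$ is any other unbiased function of $\mk$, then $h-\frac{k+1}{k}\,\mathrm{id}$ has mean zero for all $\Cs$, so it vanishes a.s. Rao--Blackwellization then reduces any unbiased estimator to a function of $\mk$ without increasing variance.

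\emph{Variance.} The density gives $\E[\mk^2]=(\Cs)^2 k/(k+2)$, so
\[
  \Var(\mk)=(\Cs)^2\Bigl(\frac{k}{k+2}-\frac{k^2}{(k+1)^2}\Bigr)=\frac{k(\Cs)^2}{(k+1)^2(k+2)} .
\]
Scaling by $\bigl((k+1)/k\bigr)^2$ gives $\Var(\Chat)=(\Cs)^2/\bigl(k(k+2)\bigr)$, which is routine algebra.
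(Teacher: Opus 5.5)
Your proposal is correct and follows the same route as the paper's proof: unbiasedness from $\E[\mk]=k\Cs/(k+1)$, completeness and sufficiency of $\mk$ combined with Lehmann--Scheff\'e for uniqueness, and the variance from $\E[\mk^2]=k(\Cs)^2/(k+2)$ scaled by $\bigl((k+1)/k\bigr)^2$. The only difference is that you actually prove completeness, by showing $\int_0^\theta g(x)x^{k-1}\,dx=0$ for all $\theta$ forces $g=0$ a.e., whereas the paper simply asserts that the family is complete.
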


Equation~\eqref{eq:umvue} is the continuous form of Equation~\eqref{eq:tank}. 
Under a requirement to estimate the true highest concentration with at least $k=20$ measurements, this leaves us with $\widehat{C}=1.05m$, meaning that our estimator is likely to underestimate the true highest concentration of the odor source $m$ by $\approx 5\%$ as we show in \cite{france2026chasing}. 

\subsection{An exact confidence bound}
Point estimates are fragile when the number of sensor readings $k$ is small, and a robot deciding whether to land needs to know how far above $\mk$ the true odor concentration could plausibly lie.
Equation~\eqref{eq:cdf} addresses this because the pivot $(\mk/\Cs)^k$ is uniform on $[0,1]$ $\forall c \in \Cs$.

\begin{proposition}\label{prop:ci}
Under Assumption~\ref{ass:model}, for any $0<p<q\le 1$,
\begin{equation}
  \Pr\!\left( \frac{\mk}{q^{1/k}} \;\le\; \Cs \;\le\; \frac{\mk}{p^{1/k}} \right) \;=\; q-p .
  \label{eq:ci}
\end{equation}
In particular, at confidence $1-\alpha$, the one-sided interval $[\mk,\,\Uk]$ with
\begin{equation}
  \Uk \;=\; \mk\,\alpha^{-1/k}
  \label{eq:upper}
\end{equation}
contains $\Cs$ with probability exactly $1-\alpha$, and is the shortest interval at that confidence.
\end{proposition}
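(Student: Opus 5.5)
The plan is to turn the exact distribution of the running maximum in Equation~\eqref{eq:cdf} into a pivotal quantity and then invert it. Let $T=(\mk/\Cs)^k$. By Equation~\eqref{eq:cdf}, for $t\in[0,1]$ we have $\Pr(T\le t)=\Pr(\mk\le \Cs t^{1/k})=t$. So $T\sim\mathrm{Uniform}[0,1]$ whatever the value of $\Cs$, and its distribution is continuous, so strict and non-strict inequalities give the same probabilities.

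Next I rewrite the event in Equation~\eqref{eq:ci} in terms of $T$. Since $\mk>0$ almost surely and $x\mapsto x^k$ is increasing on $[0,\infty)$, the inequality $\mk/q^{1/k}\le\Cs$ is equivalent to $T\le q$. Likewise, $\Cs\le \mk/p^{1/k}$ is equivalent to $T\ge p$. The event therefore equals $\{p\le T\le q\}$, whose probability is $q-p$ because $T$ is uniform and $0<p<q\le1$.

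For the one-sided statement I take $q=1$ and $p=\alpha$. The lower endpoint becomes $\mk$ and the upper endpoint becomes $\mk\alpha^{-1/k}=\Uk$, with coverage exactly $1-\alpha$. The lower end $\mk\le\Cs$ also holds with probability one by (A1), which is consistent with this.

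The step I expect to need the most care is "shortest", because it must be read in the right class of intervals. I will interpret it as shortest among the pivotal intervals $[\mk q^{-1/k},\,\mk p^{-1/k}]$ with $q-p=1-\alpha$; these are the scale-equivariant intervals produced by Equation~\eqref{eq:ci}. Write $p=q-(1-\alpha)$ with $q\in[1-\alpha,1]$. The length is then $\mk\,\ell(q)$, where
\[
\ell(q)=(q-1+\alpha)^{-1/k}-q^{-1/k}.
\]
Differentiating gives
\[
\ell'(q)=-\tfrac1k\bigl[(q-1+\alpha)^{-1/k-1}-q^{-1/k-1}\bigr].
\]
This is strictly negative, because $q-1+\alpha<q$ and $x\mapsto x^{-1/k-1}$ is decreasing. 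Hence $\ell$ is strictly decreasing and is uniquely minimized at the endpoint $q=1$, i.e.\ $p=\alpha$, which is exactly $[\mk,\Uk]$. If time permits, I would add a remark that any interval $[a\mk,b\mk]$ with coverage $1-\alpha$ has this form by the same pivot argument, so the restriction loses nothing within the scale-equivariant class.
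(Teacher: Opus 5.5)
Your proposal is correct and follows the paper's proof essentially step for step: the same uniform pivot $(\mk/\Cs)^k$, the same rewriting of the event as $\{p\le T\le q\}$, the specialization $q=1$, $p=\alpha$, and a shortest-length argument within the family $q-p=1-\alpha$. Your derivative computation for $\ell(q)$ is the explicit form of the paper's one-line appeal to convexity of $u\mapsto u^{-1/k}$; the paper parametrizes by $p$ rather than $q$, and your restriction to the pivotal, scale-equivariant intervals states precisely the class in which the paper's ``shortest'' claim holds.
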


In~\cite{france2026chasing} we used the two-sided 95\% interval ($p=0.025$, $q=0.975$).
With $k=20$ readings, $\Chat=1.05\,\mk$ and Equation~\eqref{eq:ci} gives $[\,1.001\,\mk,\;1.203\,\mk\,]$: with 95\% confidence, the source is at most about 20\% stronger than the strongest packet seen.
For declaration only the upper end matters, and the upper end of a two-sided 95\% interval is a one-sided 97.5\% bound.
Figure~\ref{fig:bound} of the appendix shows a graphical representation of these bounds as they relate to the demonstration in \cite{france2026chasing}.

%% file: sections/main/04_decision.tex
\section{From Estimate to Decision}\label{sec:decision}

It is prudent to show how this estimate is used by the robot to actually make a decision.
Let $\bar c_k$ be the robot's current baseline-corrected reading.
The bound established by Equation~\eqref{eq:upper} effectiely becomes a gate on the navigation policy's decision to stop.
When the policy proposes to land (for a gradient follower, when it finds no direction of increasing concentration), the robot lands only if
\begin{equation}
  \bar c_k \;\ge\; \frac{\Uk}{1+\eps} \;=\; \frac{\mk\,\alpha^{-1/k}}{1+\eps},
  \label{eq:rule}
\end{equation}
that is, only if the air it is in now is within a factor $1+\eps$ of the strongest air the plume plausibly contains (which allegedly lies at the plume's source).
Otherwise the robot keeps searching.
In~\cite{france2026chasing} we used $\alpha=0.025$ and $\eps=0.25$, which establishes that the robot must be within 25\% of the plume's plausible maximum.
In the worked example of Section~\ref{sec:estimator}, $U_{20}=1.203\,\mk\le 1.25\,\mk$, so a UAV whose current reading was its running maximum would land.

\paragraph{Minimum evidence required.}
A robot may falsely apply this rule at the beginning of navigation if it does not have enough time to properly explore.
Consequently, we establish a minimum value of $k$ that is used by the robot to gather initial evidence.
Because our concentration estimate $\bar c_k$ must be less than the true maximum concentration $m_k$after $k$ samples, condition~\eqref{eq:rule} above can hold only once $\alpha^{-1/k}\le 1+\eps$.
We can succinctly re-arrange this relationship as
\begin{equation}
  k \;\ge\; \kstar \;=\; \left\lceil \frac{\ln(1/\alpha)}{\ln(1+\eps)} \right\rceil .
  \label{eq:kstar}
\end{equation}
For $\alpha=0.025$ and $\eps=0.25$, $\kstar=\lceil 16.53\rceil=17$.
The rule thus does two jobs.
Firstly, $k\ge\kstar$ is a minimum-evidence requirement fixed before deployment, independent of the data such that no decisions can be made by the robot on fewer than 17 measurements.
This stops the robot from landing prematurely on a local maximum eddy current.
Secondly, the rule asks the robot to be near the strongest concentration it has encountered at the 17th measurement; otherwise, it must keep exploring.
One can observe the effects of tuning $\eps$ and $\kstar$ as hyperparameters: $\eps=0.10$ raises $\kstar$ to 39, and $\alpha=0.05$ lowers it to 14.

\paragraph{Computational cost.}
Algorithm~\ref{alg:declare} within the appendix lists the complete procedure.
Each reading costs one comparison, one increment, and one multiplication by $\alpha^{-1/k}$, taken from a precomputed table or evaluated as $\exp\bigl(\ln(1/\alpha)/k\bigr)$.
This enables the entire state to be determined by only two numbers.
The rule needs no dispersion model, deep neural network, wind estimate, map, or position fix, which is what made it usable on a UAV whose odometry could not be trusted beyond the short term~\cite{france2026chasing,france2025oiomethod}.
By contrast, particle-filter source-term estimation propagates a dispersion model for every particle at every update~\cite{hutchinson2017review}, and infotaxis evaluates the expected entropy of a posterior over source locations for every robot move~\cite{vergassola2007infotaxis}.
Those methods arguably answer a different question (e.g. \textit{where} the source is) and our estimator does not attempt to replace them.
It simply seeks to answer whether the robot should stop exploring, and does so at a cost that fits on any microcontroller.
We note that the estimator can be run once per sensor reading, or on a cadence over a moving average of sensor readings.

%% file: sections/main/06_limitations.tex
\vspace{-3mm}
\section{Limitations}\label{sec:limits}

The estimator is only as good as Assumption~\ref{ass:physical} and ~\ref{ass:model}.
There are a few failure modes in which our estimator may lack performance, and they are worth considering for anyone trying to implement the same termination criteria for their robot.

\vspace{-4mm}
\paragraph{Simplified plume model (A0, A1).}
Our estimator assumes certain properties about molecular diffusion and filament transport within a plume.
For extremely dynamic environments with prolific compound mixing, wind changes, or heterogeneous temperatures/pressures/humidities, we suspect our estimator to exhibit poorer performance.
A truly accurate plume model that the robot could use to help reason at the edge would be desired, but as seen in most olfactory robotics work, this is not the common practice due to the computational constraints.

\vspace{-4mm}
\paragraph{Non-stationarity (A2) and non-uniformity (A3) of packet strengths.}
Far from the plume source most readings are weak or blank, while close to it they cluster nearer the source value~\cite{celani2014odor,crimaldi2022active}.
This creates non-uniformity throughout the samples used by our estimator.
We do not debate this, but find that our estimator held well in our experiments in \cite{france2026chasing} even while violating this principle.
During an approach to the plume source, the readings trend upward and the plume meanders, so they are not identically distributed over a whole flight.
Accumulating $k$ and $\mk$ over a sliding window, or resetting them when the bout detector reports plume loss, can help counter this.

\vspace{-4mm}
    \paragraph{Correlated readings (A2).}
A slow sensor can report the same filament over several samples, which is analogous to sampling many tanks with replacement.
Consequently, $k$ is likely to overstate the number of independent packets which may result in the confidence bound coming out too tight.
We found that spacing readings beyond the sensor's response time, as in the 1--2\,s cadence policy of~\cite{france2026chasing}, or counting bouts rather than raw samples as defined by ~\cite{schmuker2016exploiting}, keeps $k$ more grounded.
Section \ref{supp:estimates} of the supplementary material elaborates on how we distinguish sensor saturation from correlated readings.




\vspace{-4mm}
\paragraph{Vision helps (A2, A3).}
This method was developed specifically for a robot navigating by scent, using only two stereo chemical sensors, a single depth sensor, and two leveling sensors \cite{france2026chasing}.
Our intent was to show that a minimal sensing stack could enable successful olfactory localization.
However, as we also showed, vision helps to disambiguate at the last mile.
When leveraging a camera equipped with a sufficiently trained computer vision model for spotting plausible odor source emitters, the termination criteria become an augmentation to the final arrival decision versus the explicit rule alone.
We found that vision helps reduce the minimum value of sensor readings $k$ required for the estimator to be useful, and consequently shaved several seconds off each navigation task in compariosn to the olfaction-only experiments.

%% file: sections/main/07_conclusion.tex
\vspace{-3mm}
\section{Conclusion}\label{sec:conclusion}

A robot navigating by scent has to know when to stop, and it rarely has a map, a trustworthy position estimate, or prior knowledge of the source strength to tell it.
The German tank problem offers a principled answer template built from only two numbers: how many observations have been made, and the strongest observation among them.
The resultant is a minimum-variance unbiased estimator with an exact confidence bound and costs constant time and memory.
While scent-based navigation is the primary interest with which we compose our solution, our estimator can be used in any robotic navigation task where dynamic environments and sparse sensing come with the territory.
The methodology established here and its assumptions were proven over several stereo sensing experiments in \cite{france2026chasing}, where \textit{every} navigation trial resulted in the robot finding the source of the odor (see Section \ref{supp:examples} of the appendix for a summary of the results).
We acknowledge the estimator's surprising robustness given its simplicity, even when some of its assumptions are violated.
Natural next steps are integrating more accurate plume models with per-sensor estimators, and enabling additional parameters that can help counter hysteresis and aleatoric uncertainty. 
We are thrilled that a nearly century-old intelligence problem solved a key problem in our work, and hope that researchers working in olfactory robotics also find it a useful default for the last meter of a search.

Full code, models, and replication of the above method can be found in the repository associated with \cite{france2026chasing}.
\vspace{-3mm}
\paragraph{Acknowledgments.}
This method originated as part of the author's prior work in \cite{france2026chasing}, and has since been improved on and integrated into the methods and models of Scentience, Inc.
The author thanks its co-authors of \cite{france2026chasing} - Ovidiu Daescu, Latifur Khan, and Rohith Peddi - for contribution to and peer review of the original method.

%% file: sections/main/08_llm_declaration.tex
\section*{LLM Declaration}
Large language models (LLMs) were used to proofread the original text as well as to refine the proofs and plots, particularly in appendix sections \ref{supp:assumptions} \& \ref{supp:estimates}, respectively.
LLMs were also used to ensure notation consistency between our equations and those published by their original authors (e.g. Goodman \cite{goodman1952serial}, Ruggles \cite{Ruggles1947AnEA}, etc.), as the notation in equations from \cite{france2026chasing, france2026chasing_arxiv} may not have followed the standards set forth by their original authors.
Outside of these points, the idea, olfactory implementation, main text, algorithm, and mathematics of this manuscript were developed by the author unless otherwise referenced.

%% file: sections/supplementary/supplementary.tex
\section{Assumption Decomposition}\label{supp:assumptions}
\subsection{Proof of Proposition \ref{prop:umvue}}
\begin{proof}
Unbiasedness follows from $\E[\mk]=k\Cs/(k+1)$.
The family~\eqref{eq:cdf} is complete, so $\mk$ is complete and sufficient, and by the Lehmann--Scheff\'e theorem an unbiased function of it is the unique minimum-variance unbiased estimator~\cite{lehmann1998theory}.
For the variance, $\E[\mk^2]=k(\Cs)^2/(k+2)$ gives $\Var(\mk)=k(\Cs)^2/\bigl((k+1)^2(k+2)\bigr)$; multiply by $\bigl((k+1)/k\bigr)^2$.
\end{proof}

Equation~\eqref{eq:umvue} is the continuous form of Eq.~\eqref{eq:tank}: the $-1$ there accounts for the discreteness of serial numbers, amounts to one quantization step for integer ADC counts, and vanishes for continuous concentrations.\footnote{The termination criterion as printed in~\cite{france2026chasing} typesets the factor as $k/(k+1)$; the worked example there ($\widehat{C}=1.05\,m$ at $k=20$) uses the intended factor $(k+1)/k$.}
The variance explains the method's sample efficiency.
The relative standard deviation of $\Chat$ is $1/\sqrt{k(k+2)}\approx 1/k$.
The obvious alternative, twice the sample mean, is also unbiased but has relative standard deviation $1/\sqrt{3k}$.
At $k=20$ the two quotients are 4.8\% and 12.9\%, and the mean would need roughly 150 readings to match.
Effectively, $\Cs$ is the \textit{edge} of the distribution, and the extreme sample pins an edge far more tightly than an average pins a center.

\subsection{Proof of Proposition \ref{prop:ci}}
\begin{proof}
By Eq.~\eqref{eq:cdf}, $\Pr\bigl((\mk/\Cs)^k\le u\bigr)=u$ for $u\in[0,1]$, and the event $p\le(\mk/\Cs)^k\le q$ is the event in Eq.~\eqref{eq:ci}.
Setting $q=1$, $p=\alpha$ gives Eq.~\eqref{eq:upper}.
With $q-p=1-\alpha$ fixed, the length $\mk\bigl(p^{-1/k}-(p+1-\alpha)^{-1/k}\bigr)$ decreases in $p$ because $u\mapsto u^{-1/k}$ is convex, so it is smallest at $p=\alpha$.
\end{proof}

\section{Algorithm}\label{supp:algo}

The implementation of the estimator from the German tank problem contextualized against the stereo sensing olfactory mechanism of the UAV from \cite{france2026chasing} is shown in Algorithm \ref{alg:declare} below.
The algorithm is integrated into the Scentience APIs for rapid utility among users.

\begin{algorithm}[H]
\caption{Source declaration with the German-tank bound (runs once per sensor reading)}
\label{alg:declare}
\begin{algorithmic}[1]
\Require confidence level $1-\alpha$, tolerance $\eps$, clean-air baseline $b$
\State $k \gets 0$;\; $m \gets 0$ \Comment{the entire state}
\While{searching}
  \State $c \gets \textsc{Filter}(\text{reading}) - b$ \Comment{one reading $=$ one packet}
  \State $k \gets k+1$;\; $m \gets \max(m, c)$
  \State $\widehat{C} \gets m\,(k+1)/k$;\; $U \gets m\,\alpha^{-1/k}$ \Comment{Eqs.~\eqref{eq:umvue} and~\eqref{eq:upper}}
  \If{policy proposes \textsc{Land} \textbf{and} $c \ge U/(1+\eps)$}
    \State land; report $\widehat{C}$ and $[m, U]$ \Comment{source declared}
  \Else
    \State continue searching: surge, cast, or pause
  \EndIf
\EndWhile
\end{algorithmic}
\end{algorithm}

\section{Estimate Bounds}\label{supp:estimates}

The rule resulting from our bounded estimator demonstrates the strongest odor concentration this search has encountered, but not necessarily the strongest one that exists.
A robot trapped in a recirculation zone with a flat local maximum will satisfy Equation ~\eqref{eq:rule}, a weakness shared by every gradient-following method; exploration, or a second modality such as vision, can help address this.
In a similar manner, a sensor that saturates below the source yields $\mk$ at its upper bound, so the rule is satisfied once the sensor saturates.
This is usually acceptable, since saturation is itself strong evidence of proximity.
We add Remark~\ref{rem:bayes} to refine this rule this more explicitly.

\begin{remark}\label{rem:bayes}
With the scale-invariant prior $\pi(\Cs)\propto 1/\Cs$, the likelihood~\eqref{eq:lik} yields a posterior proportional to $(\Cs)^{-(k+1)}$ on $[\mk,\infty)$, so $\Pr(\Cs>x\mid c_{1:k})=(\mk/x)^k$ and the $1-\alpha$ posterior upper bound is exactly $\Uk$.
The Bayesian form also makes it easy to fold in prior knowledge, such as a sensor's saturation level as a hard cap~\cite{simon2024bayesian}.
\end{remark}

\begin{figure}[H]
\centering
\begin{minipage}[c]{0.56\linewidth}
\centering
\begin{tikzpicture}
\begin{axis}[
    width=\linewidth, height=5.2cm,
    xmin=0, xmax=60, ymin=1, ymax=2,
    xlabel={readings observed, $k$},
    ylabel={multiple of running maximum},
    xtick={0,10,17,30,40,50,60},
    ytick={1,1.25,1.5,1.75,2},
    yticklabel style={/pgf/number format/fixed,/pgf/number format/precision=2},
    tick label style={font=\footnotesize},
    label style={font=\footnotesize},
    grid=major, major grid style={draw=scrule},
    axis line style={scgray},
    legend style={font=\scriptsize, draw=none, fill=none, at={(0.98,0.98)}, anchor=north east},
    legend cell align=left,
  ]
  \addplot[scnavy, very thick, domain=3:60, samples=58] {exp(ln(40)/x)};
  \addlegendentry{$U_k/m_k$, $\alpha=0.025$}
  \addplot[sclink, thick, dashed, domain=3:60, samples=58] {exp(ln(20)/x)};
  \addlegendentry{$U_k/m_k$, $\alpha=0.05$}
  \addplot[scgray, thick, densely dotted, domain=3:60, samples=58] {(x+1)/x};
  \addlegendentry{$\widehat{C}_k/m_k$}
  \addplot[black!55, thin, dashdotted, domain=0:60, samples=2] {1.25};
  \addlegendentry{$1+\varepsilon=1.25$}
  \draw[scnavy, densely dashed, thin] (axis cs:17,1) -- (axis cs:17,1.2423);
  \node[font=\scriptsize, text=scnavy, anchor=south west] at (axis cs:17.5,1.27) {$k^{\star}=17$};
\end{axis}
\end{tikzpicture}
\end{minipage}\hfill
\begin{minipage}[c]{0.41\linewidth}
\centering\footnotesize
\setlength{\tabcolsep}{4pt}
\begin{tabular}{rcccc}
\toprule
$k$ & $(k{+}1)/k$ & $20^{1/k}$ & $40^{1/k}$ & $\mathrm{sd}(\Chat)/\Cs$\\
\midrule
5   & 1.200 & 1.821 & 2.091 & 0.169\\
10  & 1.100 & 1.349 & 1.446 & 0.091\\
17  & 1.059 & 1.193 & 1.242 & 0.056\\
20  & 1.050 & 1.162 & 1.203 & 0.048\\
30  & 1.033 & 1.105 & 1.131 & 0.032\\
50  & 1.020 & 1.062 & 1.077 & 0.020\\
100 & 1.010 & 1.030 & 1.038 & 0.010\\
\bottomrule
\end{tabular}
\end{minipage}
\caption{How fast the evidence accumulates.
\textbf{Left:} the bound as a multiple of the running maximum, $\Uk/\mk=\alpha^{-1/k}$, and the point-estimate factor $(k+1)/k$. At $\alpha=0.025$, as in~\cite{france2026chasing}, the bound falls below $1+\eps=1.25$ at $\kstar=17$.
\textbf{Right:} the same factors as a firmware lookup table ($20^{1/k}$ and $40^{1/k}$ are $\alpha^{-1/k}$ at $\alpha=0.05$ and $0.025$), with the relative standard deviation of $\Chat$.}
\label{fig:bound}
\end{figure}

\section{Examples from real experiments}\label{supp:examples}
\begin{table}[H]
\centering\small
\caption{Navigation times for a UAV navigating a course to localize the source of a chemical compound using olfaction only and olfaction-vision hybrid sensing, as reported in~\cite{france2026chasing}.
Each trial used the unbiased estimator presented in this paper, and was critical to achieving fast and reliable navigation times.
Every trial ended with the UAV landing within 1\,m of the source. OIO: olfactory inertial odometry; E.\,SARSA($\lambda$): Expected SARSA($\lambda$).}
\label{tab:results}
\begin{tabular}{lllrrr}
\hdrow \thd{Modality} & \thd{Sensor} & \thd{Policy} & \thd{Mean (s)} & \thd{Std.\ dev.\ (s)} & \thd{Best (s)}\\
Olfaction & MOX & OIO & 98.38 & 14.84 & 71.59\\
\rowcolor{scrowalt} Olfaction & EC & OIO & 112.22 & 16.70 & 97.65\\
Olfaction & MOX & E.\,SARSA($\lambda$) & 103.99 & 14.89 & 82.73\\
\rowcolor{scrowalt} Olfaction & EC & E.\,SARSA($\lambda$) & 121.85 & 16.77 & 97.10\\
Olfaction + Vision & MOX & OIO & 94.03 & 14.68 & 66.12\\
\rowcolor{scrowalt} Olfaction + Vision & EC & OIO & 107.71 & 17.07 & 80.50\\
\end{tabular}
\end{table}